\newtheorem{definition}{Definition}
\newtheorem{thm}{Theorem}
\title{Differentially Private Decoding in Large Language Models}
\author{Jimit Majmudar \; Christophe Dupuy \; Charith Peris \; Sami Smaili \\ {\bf Rahul Gupta} \; {\bf Richard Zemel} \\
Amazon Alexa AI  \\
\texttt{\{mjimit,dupuychr,perisc,smaili,gupra,rzemel\}@amazon.com} }
\begin{document}
\maketitle
\begin{abstract}
Recent large-scale natural language processing (NLP) systems use a pre-trained Large Language Model (LLM) on massive and diverse corpora as a headstart. In practice, the pre-trained model is adapted to a wide array of tasks via fine-tuning on task-specific datasets. LLMs, while effective, have been shown to memorize instances of training data thereby potentially revealing private information processed during pre-training. The potential leakage might further propagate to the downstream tasks for which LLMs are fine-tuned. On the other hand, privacy-preserving algorithms usually involve retraining from scratch, which is prohibitively expensive for LLMs. In this work, we propose a simple, easy to interpret, and computationally lightweight perturbation mechanism to be applied to an already trained model at the decoding stage. Our perturbation mechanism is model-agnostic and can be used in conjunction with any LLM. We provide theoretical analysis showing that the proposed mechanism is differentially private, and experimental results showing a privacy-utility trade-off.
\end{abstract}

\section{Introduction}
Language Models (LMs) are trained to generate (a sequence of) tokens by learning to predict the next or missing tokens in a sentence. In practice, their output is a probability distribution over the vocabulary words from which the predicted token is drawn. Contemporary LMs are parametrized as Neural Networks (NNs) comprising millions or even billions of parameters; we refer to such LMs as Large Language Models (LLMs).
LLMs, trained on large corpora, are now widely available and are at the core of many language-based commercial services such as question-answering systems, conversational artificial intelligence, and sentence completion/correction.
Initially, LSTM-based sequence-to-sequence (seq2seq) models \citep{sutskever2014sequence} were proposed as LLMs, but in the recent past, transformer-based models \citep{vaswani2017attention, devlin2018bert, Radford2019LanguageMA, Conneau2020UnsupervisedCR} have superseded them in performance. 

Despite the performance gain, recent studies have shown that LLMs tend to unintentionally memorize portions of their training data \citep{Zhang_counter_2021, Carlini2019TheSS, Carlini2022QuantifyingMA}. Such memorization raises the risk of these models regurgitating parts of their training data, verbatim, in response to an appropriately-crafted prompt. This is problematic in cases where these models have been trained on private customer data.
In addition, intense memorization also has the potential to degrade utility and hurt fairness by not generalizing well to the true data distribution \cite{generalization2017Kawaguchi,brown2021memorization,Feldman2020memorization}. 

The extent of memorization in LLMs correlates with multiple factors. Explorations on generative models, such as GPT-2 \citep{Radford2019LanguageMA}, have shown that the main factors are the scale of the model, repetitions in the train data, and how much context was used in the extraction process \citep{Carlini2022QuantifyingMA}. \citet{Carlini2022QuantifyingMA} also suggest that most published attacks only establish a lower bound on the memorization level, with their example showing that at least 1\% of GPT-J \citep{gpt-j} training dataset is memorized. With no upper bound on memorization, it is imperative that we explore ways to mitigate any privacy risks posed by using these ``off-the-shelf'' models. 

At a first glance, we may attribute the problem of memorization as that of over-fitting to the training data. \citet{Carlini2019TheSS} invalidate this hypothesis and show that various regularization techniques such as early stopping and dropout are insufficient to mitigate memorization. The authors argue that model memorization is prevented only when using Differential Privacy (DP) training. DP, first introduced in the seminal work by \citet{dwork2006calibrating}, is a property of an algorithm defined as follows. 

\begin{definition}[$\epsilon$-Differential Privacy for Training]
Let $\mathcal{X}$ and $\mathcal{Y}$ be the sets of features and labels respectively. A randomized training algorithm $\mathcal{A}$ is $\epsilon$-differentially private if for any two datasets $S, S' \in (\mathcal{X} \times \mathcal{Y})^n$ differing in exactly one entry, and for all sets $W$ in the range of $\mathcal{A}$
\begin{equation*}
    \ln\left(\frac{\Pr[\mathcal{A}(S) \in W]}{\Pr[\mathcal{A}(S') \in W]} \right) \leq \epsilon.
\end{equation*}
\end{definition}
This definition intuitively suggests that the output of algorithm $\mathcal{A}$ does not vary too much whether a certain single data point is in the training set or not. 

In general, $\epsilon$ is seen as a privacy loss such that a smaller value corresponds to better privacy. 
The definition of DP is broader relative to the notion of memorization and therefore DP algorithms are usually robust to other threats such as data linkage attacks, side channel attacks, and membership inference attacks \citep{Jagannatha2021MembershipIA, Yeom2017TheUC}. 

While the intersection of DP and LLMs is fairly novel, the prominent approach to introduce DP into machine learning models has been to modify the training process (e.g., Differentially Private Stochastic Gradient Descent (DP-SGD) \cite{song2013stochastic,Abadi2016DPSGD}). 
Intuitively, these modifications ensure that the trained model does not depend too much on certain training data points thereby guaranteeing privacy; albeit the privacy gains come at significant costs in computation, training time, and model performance. Recent works \citep{li2021large, yu2021differentially, dupuy2021efficient} address the aforementioned challenges, but DP training still remains an active area of interest as no conclusive method satisfactorily balances all aspects. In a real-world commercial setting, the cost considerations are even magnified: retraining production-scale LLMs from scratch, without a noticeable loss in performance, would entail a massive time and monetary expense and is therefore an unappealing option.

More recently, there has been interest in introducing DP to already trained machine learning models at the prediction stage \citep{dwork2018privacy, dagan2020pac} as formalized in the following definition.

\begin{definition}[$\epsilon$-Differential Privacy for Prediction]
Let $\mathcal{X}$ and $\mathcal{Y}$ be the sets of features and labels respectively. A prediction algorithm $f$, whose weights are determined by a training algorithm $\mathcal{A}$, is $\epsilon$-differentially private if for any two datasets $S, S' \in (\mathcal{X} \times \mathcal{Y})^n$ differing in exactly one entry, for all $x \in \mathcal{X}$, and for all sets $Y \subseteq \mathcal{Y}$
\begin{equation*}
    \ln\left(\frac{\Pr[f(x;\mathcal{A}(S)) \in Y]}{\Pr[f(x; \mathcal{A}(S')) \in Y]} \right) \leq \epsilon.
\end{equation*}
\end{definition}

Intuitively, this definition suggests that the prediction of a model, on a possibly adversarially selected input feature, does not vary too much whether or not a certain single data point is in the training set. Note that notion of DP prediction is weaker than that of DP training in the sense that the latter implies the former, due to sequential composition property \citep{mironov2017renyi}. To our best knowledge, prior to this work, only \citet{ginart2022submix} study DP prediction for LLMs; the authors propose an ensemble-based method which significantly increases computational and storage costs.

Our contributions: 1) We propose a simple and computationally lightweight method for DP prediction in LLMs based on perturbation to the output probability distribution and show that our approach is differentially private; 2) We apply our method to a pre-trained LLM and compare the impact on performance and privacy.

\section{DP Decoding in LLM}

\subsection{Problem Formulation}
\label{sec:problem}
Suppose we have a vocabulary of tokens ${V = \{w_1, \dots, w_{|V|} \}}$. Denote a trained LM by $\mathcal{M}:V^*\to V^T$,
which takes as input an arbitrarily long sequence of tokens and a set of positions for which tokens need to be predicted, and outputs a probability distribution over $V$ for each token to be predicted. We assume that the model outputs at most $T$ tokens. If $\mathcal{M}$ is a Masked Language Model (MLM), then we input a sentence with some masked tokens, and $\mathcal{M}$ outputs a probability distribution for each of the masked tokens. Similarly, if $\mathcal{M}$ is a sequence-to-sequence model, then we input a sequence of tokens from $V$, and $\mathcal{M}$ outputs a sequence of probability distributions over $V$.

There are different decoding or sampling strategies which transform the probability distribution(s) output by the model to actual token(s).
Exploring such strategies is still a problem of interest to the community \citep{meister2022typical, holtzman2019curious}. It is this part of the prediction procedure that we modify, and in this work, we focus on random sampling, where the $i^{th}$ predicted token $t_i$ is drawn from the vocabulary according to the discrete distribution $q_i$ output by the model: $t_i|C_i\sim q_i\in\Delta^{|V|-1}$, with $C_i$ being the context for predicting $t_i$ (e.g., the sentence in which $t_i$ is predicted) and $\Delta^{|V|-1}$ being the probability simplex of dimension $|V|$.
Our proposed method can also be applied to other diverse sampling functions, for which DP analysis remains a future exercise. 

In DP prediction, we use a black box threat model which posits that an adversary can freely query the model and observe its outputs (they do not have access to the weights or the architecture of the trained model). By definition of the DP prediction setup, the model is not necessarily trained with DP, so the raw model does not have any privacy guarantee.
This is unlike the DP training paradigm in which the resulting model has a fixed privacy loss. The model has to be retrained from scratch every time a change in privacy loss is needed (e.g., for tuning privacy and utility), which is computationally expensive especially for LLMs.
Our approach is more practical as it does not require any retraining and the privacy guarantees can be adjusted on the go.
We consider the following question:
\begin{center}
\emph{How can we make LM predictions differentially private under a black box threat model?}    
\end{center}

\subsection{Perturbation Mechanism}
We propose a simple perturbation mechanism applied to each probability distribution $q\in\Delta^{|V|-1}$ output by model $\mathcal{M}$. We evaluate the privacy improvement due to our mechanism as a trade-off with model utility, measured via perplexity. Our perturbation mechanism is a lightweight post-processing step at the decoding stage, and therefore introduces negligible computational overhead. 

The intuition behind our perturbation mechanism is as follows. A probability distribution $q$ output by the model is the best in terms of utility of the model with respect to some underlying true data distribution, but it is worst in terms of privacy since it makes the prediction based on the data seen during training. On the other hand, the uniform distribution $u$ over the vocabulary is best in terms of privacy since it does not reveal anything about the training data, but is the worst in terms of utility for the same reason. In our approach, we obtain a perturbed distribution $q'$ by linear interpolation between the original distribution $q$ and the uniform distribution $u$:
\begin{equation}
    \label{eq:prob-pert}
    q' = \lambda q + (1-\lambda)u.
\end{equation}
We use the perturbed probability $q'$ to randomly sample a token from the vocabulary as described in Section~\ref{sec:problem}, i.e., for the $i^{th}$ predicted token $t_i$  we have ${\forall k\in\{1, \ldots, |V|\}}$, ${\Pr[t_i=w_k \mid C_i]=q'_i(k)}$, where $q'_i(k)$ denotes the $k^{th}$ entry of $q'_i$ and $C_i$ the context used to predict $t_i$.
The parameter $\lambda$ can naturally be interpreted as a privacy-utility trade-off parameter: $\lambda = 0$ yields the best privacy / worst utility while $\lambda = 1$ yields the worst privacy / best utility. In Figure~\ref{fig:simplex} we illustrate this perturbation with an example.

\begin{figure}
    \centering
    \includegraphics[width=\linewidth]{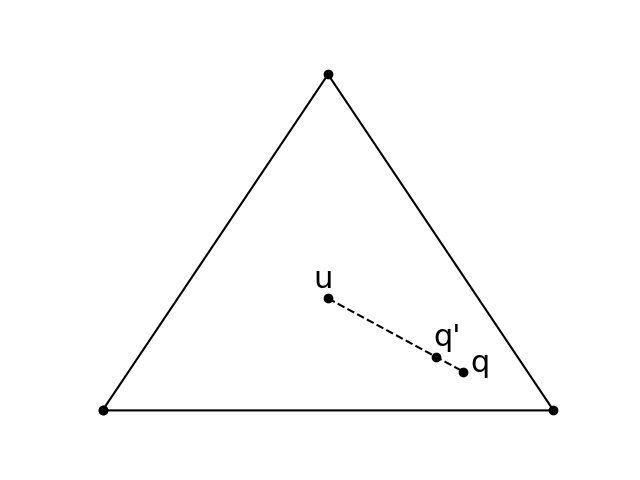}
    \caption{Probability simplex over a vocabulary with $|V|=3$. The probability output by the model $q$ is perturbed to $q'$ which lies on the line segment joining $q$ and the uniform distribution vector $u$.}
    \label{fig:simplex}
\end{figure}

The following result provides a theoretical justification for the privacy benefit of the proposed perturbation mechanism. 

\begin{thm}\label{thm:eps-dp}
Suppose an LM $\mathcal{M}$ uses the perturbation mechanism in Equation (\ref{eq:prob-pert}), for some $\lambda \in [0, 1)$, followed by random sampling. Then $\mathcal{M}$ is an $\epsilon$-differentially private prediction algorithm, with 
\begin{equation}\label{eq:dp_guarantees}
\epsilon=T \log \left( \dfrac{1+ (|V|-1)\lambda}{1- \lambda} \right).    
\end{equation}
\end{thm}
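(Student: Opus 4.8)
The plan is to prove the bound first for a single predicted token and then lift it to the length-$T$ output by composition. The structural fact that makes this work is that the perturbation in Equation~(\ref{eq:prob-pert}) renders the guarantee \emph{insensitive} to how much the model's raw distribution $q$ changes between neighboring inputs: even if $q$ moves arbitrarily when one training entry is altered, every coordinate of the perturbed distribution $q'$ is trapped in a fixed interval, and that interval alone controls the privacy loss. So, unlike a typical DP analysis, I would not need to reason about the sensitivity of $\mathcal{M}$ at all.

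First I would fix a single token position and write $q'(k) = \lambda q(k) + (1-\lambda)/|V|$ for $k \in \{1,\dots,|V|\}$; since $q(k)\in[0,1]$ this immediately gives the deterministic two-sided bound
\[
\frac{1-\lambda}{|V|}\;\le\; q'(k)\;\le\; \lambda + \frac{1-\lambda}{|V|} \;=\; \frac{1+(|V|-1)\lambda}{|V|}.
\]
Therefore, for neighboring inputs $X,X'$ (which induce possibly very different raw distributions, hence perturbed distributions whose coordinates both lie in the interval above) and any single token $w_k$,
\[
\frac{\Pr[t=w_k\mid X]}{\Pr[t=w_k\mid X']}\;\le\;\frac{\bigl(1+(|V|-1)\lambda\bigr)/|V|}{(1-\lambda)/|V|}\;=\;\frac{1+(|V|-1)\lambda}{1-\lambda}\;=:\;R.
\]
Since this holds for every token, it also holds after summing over any subset of tokens, so sampling a single token is $\epsilon_1$-DP with $\epsilon_1=\log R$; here $\lambda<1$ is exactly what keeps $R$ finite, and $\lambda\ge 0$ gives $R\ge 1$, hence $\epsilon_1\ge 0$.

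Then I would handle all $T$ positions by composition. With autoregressive sampling an output $(t_1,\dots,t_T)$ has probability $\prod_{i=1}^T \Pr[t_i\mid X,t_{<i}]$, where the $i$-th factor is a perturbed distribution whose context is $X$ together with the prefix $t_{<i}$. Applying the single-token bound to each factor --- which is legitimate precisely because that bound is uniform over \emph{all} contexts and hence unaffected by conditioning on a shared prefix --- gives
\[
\frac{\Pr[\mathcal{M}(X)=(t_1,\dots,t_T)]}{\Pr[\mathcal{M}(X')=(t_1,\dots,t_T)]}\;=\;\prod_{i=1}^T\frac{\Pr[t_i\mid X,t_{<i}]}{\Pr[t_i\mid X',t_{<i}]}\;\le\;R^{T}.
\]
Summing over the sequences in an arbitrary output set $Y$ preserves the inequality, so taking logarithms yields $\epsilon = T\log R = T\log\!\bigl(\tfrac{1+(|V|-1)\lambda}{1-\lambda}\bigr)$, as claimed; equivalently, this is the basic $T$-fold (adaptive) composition theorem for pure DP applied to the per-token mechanism.

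I expect the composition step to be the part needing the most care, since decoding is autoregressive: the distribution used at step $i$ depends on the tokens drawn at steps $1,\dots,i-1$, so this is adaptive rather than parallel composition. It nevertheless goes through cleanly because the per-token ratio bound $R$ is worst-case over \emph{every} possible context, so no union bound or data-dependent accounting is required. A minor additional point is the phrase ``at most $T$ tokens'': if the output length itself varied with the input, I would fold the stopping rule into the output (e.g., an end-of-sequence symbol, so that length is a function of the emitted sequence), after which the factorization still holds with at most $T$ factors, each bounded by $R\ge 1$, and the $R^T$ bound survives.
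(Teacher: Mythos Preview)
Your proposal is correct and follows essentially the same route as the paper: bound each coordinate of the perturbed distribution by $(1-\lambda)/|V|$ and $\lambda+(1-\lambda)/|V|$, take the ratio of the product-form output probabilities, and bound each factor by $R$, then use $R^T$. The only (minor) difference is that the paper writes the product over the $z\le T$ masked positions of an MLM and treats the factors as independent given context, whereas you explicitly handle the autoregressive factorization and argue adaptive composition; this extra care is sound but not a different idea.
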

\begin{proof}
See Appendix~\ref{ap:theory}.
\end{proof}
\section{Experimental Results}
In this section, we show the privacy-utility trade-off resulting from the application of the proposed perturbation mechanism. We use a RoBERTa-style encoder model specified by \citet{shoeybi2019megatron} trained with an MLM objective. The model is parametrized by around 2 billion parameters and is trained on a corpora of 1 trillion tokens comprising Common Crawl\footnote{\url{https://commoncrawl.org/}}, Wikipedia\footnote{\url{www.wikipedia.org}} and mC4 \cite{mt5_2021} datasets; the size of the vocabulary is around $|V|=150,000$ tokens. We select $125,000$ examples from the training set. We randomly mask 15\% of the tokens (note that 15\% is standard practice in literature \cite{devlin2018bert}) and pass the masked sentences through the model.

For showing the privacy-utility trade-off, we represent the privacy loss as the $\epsilon$ derived in Theorem~\ref{thm:eps-dp}
(a smaller $\epsilon$ corresponds to better privacy guarantee) and the utility of the model is captured as the corpus-level perplexity. 
Observe that $T$ occurs in our theoretical analysis as an upper bound on the actual number of tokens predicted for an example.
In practice, since we mask (and predict) 15\% of tokens for each sentence, the number of predicted tokens (hence the privacy loss $\epsilon$) varies from a sentence to the other. We therefore report the average $\epsilon$ over the corpus which is computed by setting $T$ as the average number of masked tokens per example.


For each value of $\lambda\in\{0.1, 0.2, \ldots, 0.9, 1\}$ we compute the perplexity for the corresponding perturbed output distributions. For the chosen set of $125,000$ examples we repeat the masking and evaluation steps 3 times and average the perplexity over these 3 restarts. Note that for $\lambda=0$, we have the best privacy guarantees $\epsilon=0$ but the worst perplexity value $\mathrm{PPL}=|V|=150000$, since $\forall i, k, \Pr[t_i=w_k \mid C_i]=1/|V|$.


\begin{figure}
    \centering
    \includegraphics[width=\linewidth]{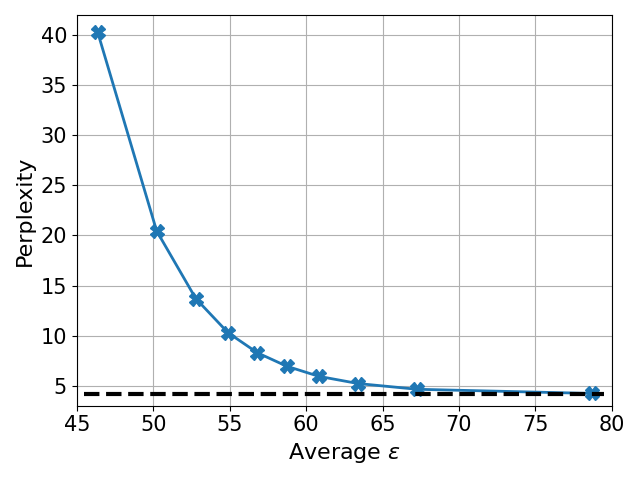}
    \caption{Perplexity as a function of DP-$\epsilon$ values. The dotted line is the perplexity of the model without privacy guarantee.}
    \label{fig:pu}
\end{figure}

\subsection{Discussion}
Privacy/utility results are presented in Figure \ref{fig:pu}. Additionally, we show examples of model predictions at different privacy levels in Appendix~\ref{ap:examples}.
As expected, the model performance degrades as the privacy guarantee improves. In particular, we observe that the perplexity of the model is significantly degraded for $\epsilon\leq 60$ (i.e. $\lambda \leq 0.7$ with perplexity more than $40\%$ higher than for the non-private model). The model becomes unusable at $\epsilon\leq 50$ (i.e. $\lambda \leq 0.2$) where the perplexity is above 20 ($\approx350\%$ relative increase compare to the non-private model). For higher values of $\epsilon$ ($\epsilon\geq 65$), the performance degradation compared to the non-private model progressively goes from moderate to negligible. Note that the maximum value of $\epsilon$ is around 80 in our experiments, which suggests it is possible to get reasonable privacy guarantees at a minimal cost in utility.

In Equation (\ref{eq:dp_guarantees}), the linear dependency of $\epsilon$ in the maximum number of predictions $T$ means that the privacy loss $\epsilon$ depends on the type of dataset used for predictions. In our case, Common Crawl data has long sentences, which means $\epsilon$ increases rapidly with $\lambda$. Another interesting aspect is the dependency on the vocabulary size $|V|$. In our approach, we sample from the entire vocabulary which translates in the privacy loss $\epsilon$ having a hard dependency on the number of tokens the model can predict. In practice, it is likely that only the top tokens should be considered, reducing the support of the distribution. A worthwhile future research direction is to improve the privacy guarantee of the proposed method by reducing the support size of the distribution. 

\section{Conclusion}
In this work, we study the problem of enhancing the privacy of an already trained LLM, without the necessity of retraining the model. We propose a simple perturbation mechanism to be applied to the discrete probability distributions output by the trained model, whose privacy benefits are demonstrated using the notion of differential privacy. Due to the post-processing property of differential privacy, the privacy improvements obtained in LLMs also transfer to the tasks for which such models are further fine-tuned. Note that while our mechanism perturbs the softmax layer output at the decoding stage, applying the same type of perturbation at the training stage has previously been studied under the name of ``label smoothing'' and shown improvements in terms of generalization and speed of learning in many domains \citep{muller2019does}. This suggests a future direction to investigate the generalization benefits of our proposed work. More broadly, this connection may support the observation that in some cases DP methods provide better generalization \citep{dwork2015generalization}.

There are also some natural extensions to the work introduced here. Further exploring structured perturbations as proposed in this paper, or considering random or adaptive perturbations in conjunction with other sophisticated sampling functions such as beam search are potentially useful next steps in this research direction. The privacy benefits obtained by such mechanisms can also be quantified by more specific metrics tailor-made to gauge model memorization; however, even defining such metrics is currently an active research endeavour \citep{Carlini2019TheSS, Carlini2022QuantifyingMA}.

\bibliography{anthology,custom}

\clearpage
\appendix

\section{Proof of Theorem \ref{thm:eps-dp}}
\label{ap:theory}
\begin{proof}
Note that Equation (\ref{eq:prob-pert}) can be rewritten as:
\begin{equation*}
\Pr[t_i=w_k \mid C_i] = q'_i(k)= \lambda q_i(k) + \dfrac{(1-\lambda)}{|V|}
\end{equation*}
which, combined with $0\leq q_i(k) \leq 1$, implies that 
\begin{equation}\label{eq:bounds}
\dfrac{1-\lambda}{|V|} \leq \Pr[t_i=w_k \mid C_i] \leq \lambda +\dfrac{1-\lambda}{|V|}.
\end{equation}

Then consider any two LMs $\mathcal{M}, \mathcal{M}'$, and an input sentence $x$ with $z \in \mathbb{N}$ masked tokens indexed by the set $[z]$. For any set of output tokens $y$, we have
\begin{equation}\label{eq:2}
\begin{aligned}
    &\dfrac{\Pr[\mathcal{M}(x) = y]}{\Pr[\mathcal{M}'(x) = y]} = \prod_{i \in [z]} \dfrac{\Pr[t_i=y_i \mid C_i]}{\Pr[t'_i=y_i \mid C'_i]}
\end{aligned}
\end{equation}
where $y_i$ denotes the token in $y$ at the $i^{th}$ masked position. Using Equation (\ref{eq:bounds}) in the above gives:
\begin{equation}\label{eq:3}
\begin{aligned}
    \prod_{i \in [z]} \dfrac{\Pr[t_i=y_i \mid C_i]}{\Pr[t'_i=y_i \mid C'_i]} &\leq \left( \dfrac{1+ (|V|-1)\lambda}{1- \lambda} \right)^z \\
    &\leq \left( \dfrac{1+ (|V|-1)\lambda}{1- \lambda} \right)^T
\end{aligned}
\end{equation}
where the last inequality above uses the fact that the maximum number of tokens predicted by $\mathcal{M}$ is $T$.

Combining Equations (\ref{eq:2}) and (\ref{eq:3}), and applying logarithmic transformation yields the desired result. 
\end{proof}

\section{Model Performance Examples}\label{ap:examples}
For each privacy level, we give three examples of MLM performance. The bold text in the true line indicates the tokens which were masked and the corresponding tokens in the predicted line indicate the ones predicted by the model. As observed in Figure \ref{fig:pu} at an aggregate level, the following examples show a deterioration in model performance as the privacy level improves. 

\begin{itemize}
    \item Privacy level: $\epsilon = \infty \;(\lambda = 1)$
    \begin{itemize}
        \item  \underline{\emph{True line}}: Our body does not naturally \textbf{produce} enough amino nutrients, so \\
        \underline{\emph{Predicted line}}: Our body does not naturally \textbf{produce} enough amino nutrients, so \\
        
        \item \underline{\emph{True line}}: \textbf{need} assistance after plung\textbf{ing} into the lagoon from the \textbf{slide} \\
        \underline{\emph{Predicted line}}: \textbf{medical} assistance after plung\textbf{ing} into the lagoon from the \textbf{sky} \\
        
        \item \underline{\emph{True line}}: We \textbf{came} \textbf{to} an area with temples. \\
        \underline{\emph{Predicted line}}: We \textbf{live} \textbf{in} an area with temples. 
    \end{itemize}
    \item Privacy level: $\epsilon \approx 63 \;(\lambda = 0.8)$
        \begin{itemize}
        \item  \underline{\emph{True line}}: it is \textbf{analog}, you have to play things all the \\
        \underline{\emph{Predicted line}}: it is \textbf{rewarding}, you have to play things all the \\
        
        \item \underline{\emph{True line}}: big fan \textbf{of} this poem for some unknown reason. (Report) \\
        \underline{\emph{Predicted line}}: big fan \textbf{of} this poem for some unknown reason. (Report) \\
        
        \item \underline{\emph{True line}}: \textbf{Biography} (among other categories), \textbf{it} has also been with Fiction. \\
        \underline{\emph{Predicted line}}: \textbf{Photography} (among other categories), \textbf{but} has also been with Fiction.
        \end{itemize}
    \item Privacy level: $\epsilon \approx 59 \;(\lambda = 0.6)$
    \begin{itemize}
        \item  \underline{\emph{True line}}: than most people do in their life \textbf{time}! And \textbf{it}s \\
        \underline{\emph{Predicted line}}: than most people do in their life \textbf{time}! And \textbf{here}s \\
        
        \item \underline{\emph{True line}}: I'\textbf{ve} been thrown by circumstances on a far \textbf{more} regular \\
        \underline{\emph{Predicted line}}: I'\textbf{ve} been thrown by circumstances on a far \textbf{more} regular \\
        
        \item \underline{\emph{True line}}: from \textbf{ever} happening \textbf{again}. Understand that your problem is simply  \\
        \underline{\emph{Predicted line}}: from \textbf{the} happening \textbf{again}. Understand that your problem is simply
    \end{itemize}
\end{itemize}

\end{document}